\newcommand{\Xbf}{\mathbf{X}}
\newcommand{\Vbf}{\mathbf{V}}
\newcommand{\Abf}{\mathbf{A}}
\newcommand{\Bbf}{\mathbf{B}}
\newcommand{\Mbf}{\mathbf{M}}
\newcommand{\Ibf}{\mathbf{I}}
\newcommand\numberthis{\addtocounter{equation}{1}\tag{\theequation}}
\newcommand{\tr}{\textrm{tr}}
\newcommand*{\approxdist}{\mathrel{\vcenter{\offinterlineskip
\vskip-.25ex\hbox{\hskip.55ex$\cdot$}\vskip-.25ex\hbox{$\sim$}
\vskip-.5ex\hbox{\hskip.55ex$\cdot$}}}}
\newtheorem{corollary}{Corollary}
\newtheorem{prop}{Proposition}
\newcommand{\proglang}[1]{\textsf{#1}}
\newcommand{\strong}[1]{{\normalfont\fontseries{b}\selectfont #1}}
\newcommand{\pkg}[1]{\strong{#1}}
\newcommand{\norm}[1]{\left|\left|#1\right|\right|}
\newcommand{\code}[1]{\mbox{\texttt{#1}}}
\begin{document} 

\title{Scatter Matrix Concordance: A Diagnostic for Regressions on Subsets
of Data}

\author[1]{Michael J. Kane}
\author[2]{Bryan Lewis}
\author[3]{Sekhar Tatikonda}
\author[4]{Simon Urbanek}

\affil[1]{Yale University and Phronesis LLC, New Haven CT, USA}
\affil[2]{Paradigm4, Waltham MA, USA}
\affil[3]{Yale University, New Haven CT, USA}
\affil[4]{AT\&T Labs Research, New York NY, USA}

\maketitle

\begin{abstract}

Linear regression models depend directly on the design matrix and its
properties.  Techniques that efficiently estimate model coefficients by
partitioning rows of the design matrix are increasingly popular for large-scale
problems because they fit well with modern parallel computing architectures.
We propose a simple measure of {\em concordance} between a design matrix 
and a subset
of its rows that estimates how well a subset captures the variance-covariance
structure of a larger data set.  We illustrate the use of this measure in a
heuristic method for selecting row partition sizes that balance statistical and
computational efficiency goals in real-world problems.

\end{abstract} 

\section{Introduction}
\label{sect:intro}

A common procedure in supervised learning problems when the number of rows of
data is large and far exceeds the number of columns is to partition the rows,
fit models on individual partitions, and combine them by averaging or other
aggregation into a single model. This computational approach is referred to as
``Divide and Recombine'' (D\&R) in \cite{Guha2012} and has gained wide use in
part because it can be described as a single MapReduce \citep{Dean2008} step
and is easily implemented in software frameworks like Hadoop
\citep{ApacheHadoop} and Spark \citep{ApacheSpark}.

Partitions are constructed either by {\em conditioning-variable division} or
{\em replicate division}. The former adds samples to a partition based on one
or more of the variables in the data. For categorical variables this generally
means one partition per level and is equivalent to a variable interaction
between the categorical variable being conditioned on all other regressors in
the model. For continuous variables this means one partition per range of
values.  Alternatively, data can be partitioned along multiple
conditioning variables. Replicate division creates partitions using random
sampling without replacement.

\cite{Guha2012} provide conditions such that a model constructed by averaging
coefficients from ordinary least squares models along replicate division data
partitions converges asymptotically to the single ordinary least squares model
fitted to all of the data, called the {\em reference model}.  We constrain our
attention to D\&R under replicate division to create a single, averaged model.

% and is sometimes referred to as the {\em
%reference model}. It should be noted that the D\&R
%approach does not dictate that a single model should result from the
%``recombine'' step and it could have just as easily resulted 
%an ensemble of models trained on partitions. 
%For the rest of this paper
%we will constrain out attention to D\&R under replicate division.

Note that the D\&R method is related to boosting since in both cases an
ensemble of models is created by sampling from the original data set. There are
differences, however. First, when boosting samples with replacement the D\&R
averaged model samples without replacement.  Second, where the result of 
training a boosted
model is an ensemble of individual learners the result of training a D\&R
model is a single model where parameters have been averaged together.

D\&R describes a general computational approach to large scale regression and
classification requiring only a single pass through the data set. This
gives D\&R a computational advantage over those models
which require multiple passes through the data. 
\cite{Matloff2014} extends both the statistical results by providing a broader
class estimators under which D\&R converges 
as well as the complexity analysis 
by showing that the computational speed up can be greater than
the number of parallel processes applied to each division. 
\cite{Kleiner2011} also developed the statistical theory further by
identifying a general class of models that converge under the random partition
assumption as well as showing that bootstrap models can be created on each
partition and the ensemble of models over all partitions converges
asymptotically to a single bootstrap model over the entire data set.

There are at least two practical issues to consider when implementing the D\&R
approach to model fitting with replicate division on distributed data.  First,
partitions should consist of random sets of rows to ensure that the aggregate
model converges to its reference. This is important in real-world, finite-sized
problems to avoid artifacts related to ordering in the data, for example from
the data collection process.  If samples are arranged in partitions, without
randomization, then individual partitions may capture information that is
drastically different from the other partitions or the total population. As a
result, an aggregate model can perform poorly when compared to its reference.
%To address this challenge the sample may need to be randomized during
%collection or shuffled afterwards. The former option is generally preferred
%since it requires less data movement and model fitting can be localized to
%partitions.

The second challenge is in deciding the number of samples per block. The
convergence results show that the ensemble of models converges to the reference
model asymptotically in the number of random samples in the partitions.  This
implies that, to minimize the difference between the ensemble and the
reference, block sizes should be as large as possible. However, the amount of
speed-up achieved is generally directly proportional to the number of partitions
(assuming each block is allocated its own process). Therefore, there is a
trade-off between the statistical consistency of a model created using the D\&R
approach (with respect to its reference) and the parallelism that can be
achieved when fitting the model. 

The regression techniques presented work on subsets of disjoint 
rows and, with this observation, a natural subsequent
question is how well do these individual subsets represent the data set
as a whole? This is a relevant question 
since if we have a small representative subset then we may not need all
of the data. We may be able to fit a model more efficiently 
by only using the representative subset. If we find that some subsets
differ drastically from the rest of the data set we may need to investigate
these subsets further. They may indicate different underlying relationships
that can be conditioned upon or they may indicate data integrity issues. In
either case they motivate further investigation.

This paper explores the trade-off between statistical consistency and
computational efficiency by introducing a statistic that estimates the
concordance between a subset and the entire data set based on their
respective variance-covariance structures.  We provide reproducible experiments
that illustrate our concept of concordance and its use along with benchmark
results. A final section is devoted to discussion and future directions.

\section{Motivating Case: Least Squares Regression} \label{sect:motivation}

Consider the following example linear model and corresponding least
squares problem:
\begin{equation} \label{eqn:linear_model}
Y = \Xbf \beta + \varepsilon,
\end{equation}
where $Y, \varepsilon \in \mathcal{R}^n$, and $\beta\in\mathcal{R}^d$, $n\ge d$;
each element of $\varepsilon$ is an i.i.d.
random variable with mean zero; and 
$\Xbf$ is a matrix in $\mathcal{R}^{n \times d}$ with full column rank.
The ordinary least squares
problem is posed as $\widehat \beta = \text{argmin}_\beta %\underset{\beta}{\operatorname{argmin}}
  \norm{\Xbf \beta - Y}^2$
and has the closed form analytic solution defined by the normal equations:
\begin{equation} \label{eqn:lm_closed_form}
\widehat \beta = \left( \Xbf^T \Xbf \right)^{-1} \Xbf^T Y.
\end{equation}
We remark that computed solutions rarely use Equation \ref{eqn:lm_closed_form}
directly but rather use QR or SVD decompositions of $\Xbf$ for numerical
stability. Equation \ref{eqn:lm_closed_form} remains important for analysis
purposes. Consider the row-wise partitioning of Equation \ref{eqn:linear_model}:
\begin{equation*}
\left[
  \begin{array}{c} Y_1 \\ Y_2 \\ \vdots \\ Y_r \end{array} \right] = 
  \left[ \begin{array}{c} \Xbf_1 \\ \Xbf_2 \\ \vdots \\ \Xbf_r
\end{array}
\right]
  \beta + \left[ \begin{array}{c} \varepsilon_1 \\ \varepsilon_2 \\ \vdots \\
                 \varepsilon_r \end{array} \right],
\end{equation*}
where
$Y_1, Y_2, ..., Y_r$, $\Xbf_1, \Xbf_2, ..., \Xbf_r$ and 
$\varepsilon_1, \varepsilon_2, ...,  \varepsilon_r$ are 
data partitions such that each block is composed of subsets of
rows of a data set, blocks are disjoint, and the aggregate of all
blocks is the original data set.  Without loss of generality we
assume that $n/r=c$ is an integer so that the number of samples in 
each submatrix is the same. The blocks may, for example, be located in files
across a network of computers for distributed computation.

When each of the $\Xbf_i$'s represent a random partition of $\Xbf$ then
the estimate for least squares regression coefficients using D\&R averages 
the block-wise estimates of the slope coefficients $\tilde \beta_i$,
\begin{equation}\label{blockwise_dnr}
\tilde{\beta}_i = \left(\Xbf_i^T \Xbf_i \right)^{-1} \Xbf_i^T Y_i, \qquad
\tilde{\beta} = \frac{1}{r} \sum_{i=1}^r \tilde{\beta}_i.
\end{equation}
Compare the D\&R approach to block-wise computation of
the (full) least squares solution:
\begin{equation}\label{blockwise_full}
\hat{\beta} = \left(\sum_{i=1}^r \Xbf_i^T \Xbf_i \right)^{-1}
\sum_{i=1}^r \Xbf_i^T Y_i.
\end{equation}
Both approaches are similarly easy to compute in parallel. The overall
computational cost of both approaches is $O(d^3)$, although the D\&R
approach (Equation \ref{blockwise_dnr}) has a larger constant term by a factor 
of $r$
compared to block-wise solution of the full problem 
(Equation \ref{blockwise_full}).

Recall that the D\&R solution $\tilde{\beta}$ is an estimate of the true
solution $\hat{\beta}$.
It's reasonable to ask the why one would spend {\em more} computational effort
to produce only an estimate of a solution, when a similarly easy direct
solution method is available. Despite the apparent advantage of the direct
block-wise solution method shown in Equation \ref{blockwise_full}, 
the D\&R approach is
potentially superior in two ways: lower network communication cost in parallel
computing settings and better numerical stability. We outline each advantage
below.

{\em Lower network communication cost.} Assume that the problem is distributed
so that each data block $i=1,2,\ldots,r$ is located on a different computer in
a network.  The D\&R approach shown in Equation \ref{blockwise_dnr} 
averages $r$ sets of
$d$ model coefficients to produce an averaged output model, for a total $rd$
numbers transmitted between computers over the network. Block-wise solution of
the full problem outlined in \ref{blockwise_full}, by comparison, sums $r$
partial $d\times d$ matrix products and $r$ vectors of length $d$, for a total
of $rd^2 + rd$ numbers to transmit over the network. The $d^2$ communication
cost for the full solution is expensive when there are moderately large numbers
of columns in the model matrix. For example with 8-byte double precision
floating point numbers, $r=100$ and $d=1000$, the full problem solution must
transmit $800\,$MB across the network. The D\&R approach only transmits
$800\,$KB by comparison.

{\em Better numerical stability.} Although we routinely use the normal
equations for analysis of least squares problems in exact arithmetic, their use
computationally is not recommended because the matrix $\Xbf^T\Xbf$ is generally less
well-conditioned (and never better conditioned) than $\Xbf$.  Instead, least
squares problems are typically computed using either a QR or singular value
decomposition of the model matrix $\Xbf$.  Unfortunately, parallel computation
of a QR or SVD factorization for block row-distributed matrices is neither easy
nor readily available to most high-level programming languages like R and
Python, especially in MapReduce-like computing settings\footnote{A notable
exception is HPC systems using ScaLAPACK and MPI--see the R {\bf pbd} package,
for example--but these are usually very specialized systems.}. Note also that
model matrices involving contrast variables derived from so-called factor
variables are often sparse. SVD and QR decompositions destroy sparsity patterns
and the resulting factorized model matrix may consume much more memory than the
original.  Distributed computation of the full problem is typically performed
using the normal equations as shown in Equation \ref{blockwise_full} for 
these reasons.
The D\&R approach, by contrast, is free to use numerically stable solution
methods in each block. Moreover, since the blocks are relatively small, loss of
a sparse model matrix representation in each block due to factorization is more
tolerable.

We illustrate our point about numerical stability
with a simple example. 
\begin{equation*}
X = 
\left(
  \begin{array}{cc}
    10^9 & -1\\
    -1  & 10^{-5}\\
  \end{array}
\right),\qquad
\beta = \left(\begin{array}{c} 1\\1\\ \end{array}\right),
\qquad\mbox{ and },\qquad
y = X\beta.
\end{equation*}
Note that $X$ is an ill-conditioned matrix, but not so ill-conditioned 
to prevent
numerically-stable techniques from working.  This can be demonstrated
using the R programming environment \citep{Rcore} to compare least square 
solutions of this example
computed using a stable technique and using the normal
equations.
\begin{flushleft}
\begin{verbatim}
> X <- matrix( c(1e9,-1,-1,1e-5), 2)
       [,1]   [,2]
[1,]  1e+09 -1e+00
[2,] -1e+00  1e-05

> y <- X %*% c(1,1)
\end{verbatim}

One stable least-squares solution approach gives us
the expected solution:

\begin{verbatim}
> qr.solve(X,y)
     [,1]
[1,]    1
[2,]    1
\end{verbatim}

Now with the normal equations. Note that in \proglang{R} we
form the inverse of $X^T X$ as described in Equation \ref{eqn:lm_closed_form}. 

\begin{verbatim}
> qr.solve(t(X) %*% X) %*% t(X) %*% y
Error in qr.solve : singular matrix 'a' in solve
\end{verbatim}

An informative error. If we override the error, 
we get a bad result that is very different
from what we expected:

\begin{verbatim}
> qr.solve(t(X) %*% X, tol=0) %*% t(X) %*% y
             [,1]
[1,]    0.9999995
[2,] 1080.4998042
\end{verbatim} 
\end{flushleft}

The example, although pathological by design, shows that using the normal
equations directly to solve least squares problems can lead to failure or 
poor results.
Even in better-conditioned problems, use of the normal equations will
result in a loss of numerical accuracy in the solution.  The D\&R method is
free to employ numerically-stable techniques in each block.

The normal-equation approach does have the advantage of being easily
updatable.  data are received they can be added to a new
block. If the block-coefficients are stored then updating the model is simply a
matter of getting the estimate for the new block and once again averaging over
all of the coefficient estimates. The removal of data or the updating of an
existing block can be handled similarly. The D\&R method is only updatable
when the incoming data are guaranteed to be distributed at random, that
is there is no local correlation structure that exists only a new block.
The effect of non-randomness is explored further in 
Section \ref{sect:nonrandom}.

%We have already remarked that the D\&R solution $\tilde \beta$ is, however,
%only an estimate of $\widehat \beta$ and will in general differ from it.  This
%paper suggests methods for improving the estimate by analyzing the statistical
%properties of the scatter matrix for a block of data with respect to its
%reference by proposing a measure of {\em concordance} between the two.

%\begin{prop}
%If each block $\Xbf_i$ for $1 \leq i \leq r$ resides in a different location
%then the communication complexity required to calculate $\Xbf^T\Xbf$ is $O(r)$.
%\end{prop}
%\begin{proof}
%\begin{equation*}
%\Xbf^T \Xbf = \sum_{i=1}^{r} \Xbf_i^T \Xbf_i
%\end{equation*}
%Each $\Xbf_i^T \Xbf_i$ calculation can be performed without communication.
%When this is complete the block-wise result must be aggregated over $r$
%different locations.
%\end{proof}

%\begin{prop}
%The computational complexity required to calculate the distributed
%matrix $\Xbf^T\Xbf$ is
%$O(rcd^2 + rcd)$.
%\end{prop}
%\begin{proof}
%Each of the $r$ block-wise matrix multiplications have complexity
%$O(cd^2)$. Aggregation of the resulting blocks is a matrix addition 
%over $r$ blocks incurs a complexity of $O(cd)$.
%\end{proof}

%Note that the complexity to multiply $\Xbf^T \Xbf$ is $O(nd^2)$, which means
%that the computational overhead is a factor of $(1 + 1/d)$.

\section{Common-basis Concordance}

The previous section shows the trade-offs and characteristics of estimating
the slope coefficients in a ordinary linear regression using the 
normal equations along with D\&R. In both cases the slope coefficients are 
a function
of the variance-covariance structure among regression and regressor 
variables and it seems reasonable that fitting subsets a subset of data
could give similar results and require less computational complexity.
This section introduces concordance as a statistic for capturing
how well a subset of data represents the whole. For this paper,
a data set is ``representative'' of a larger data set if the correlation
structure of the two data sets are similar and we are most interested
in the case where one data set is a small subset of the rows of a larger 
data set. In Section \ref{sect:conclusion} other potential
applications and directions are proposed.

Let $\Abf$ and $\Bbf$ be $n \times d$ and $m \times d$ matrices respectively
The previous section is meant to motivate 
with $m,n\ge d$; let $\Abf^T \Abf$ and $\Bbf^T \Bbf$ have eigenvalue
decompositions $\Vbf^T \lambda_A \Vbf$ and $\Vbf^T \lambda_B \Vbf$,
$\Vbf^T\Vbf=\Ibf$, respectively; and assume $(\Abf^T \Abf)^{-1}$ and $(\Bbf^T
\Bbf)^{-1}$ exist.
The concordance of these two matrices is defined as
\begin{equation} \label{eqn:rel_stability}
S(\Abf, \Bbf) = \frac{n}{dm} \left| \left|\Abf \Bbf^\dagger \right| \right|^2_F,
\end{equation}
where $\Bbf^\dagger=(\Bbf^T\Bbf)^{-1}\Bbf^T$ is the Moore-Penrose matrix
pseudo-inverse of $\Bbf$, and $\|\cdot\|_F$ is the Frobenius matrix norm.
This measure essentially compares the
variance-covariance structure between two matrices.  A concordance value of one
results when $\Abf^T\Abf = \Bbf^T\Bbf$.
The concordance value is less than one when there is, on average, less variance
in $\Abf$ than $\Bbf$, and the concordance value is greater than one in the
reverse case.  Statistical characteristics of common-basis
concordance will be derived in the next section. The rest of this section
derives some deterministic characteristics.

\begin{prop} \label{prop:trace_equivalence}
If the common-basis concordance conditions specified above hold, then
\begin{equation} \label{eqn:trace_equivalence}
\left| \left| \Abf \Bbf^\dagger \right| \right|_F^2 =
  \sum_{i=1}^d \frac{ \lambda_A (i) }{ \lambda_B (i) }
\end{equation}
where $\lambda_A(i)$ and $\lambda_B(i)$ are the $i$th eigenvalues of 
$\Abf^T\Abf$ and $\Bbf^T\Bbf$ respectively.
\end{prop}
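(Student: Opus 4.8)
The plan is to turn the squared Frobenius norm into a trace, eliminate the Moore--Penrose pseudo-inverse through its normal-equation form $\Bbf^\dagger=(\Bbf^T\Bbf)^{-1}\Bbf^T$, collapse the resulting matrix product using the cyclic invariance of the trace, and only at the very end invoke the common-basis hypothesis by substituting the shared eigendecomposition.

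Concretely, I would start from $\norm{\Mbf}_F^2=\tr(\Mbf^T\Mbf)$ applied to $\Mbf=\Abf\Bbf^\dagger$, and then use that $(\Bbf^T\Bbf)^{-1}$ is symmetric, so $(\Bbf^\dagger)^T=\Bbf(\Bbf^T\Bbf)^{-1}$. This gives the chain
\begin{equation*}
\norm{\Abf \Bbf^\dagger}_F^2 = \tr\!\left((\Bbf^\dagger)^T\Abf^T\Abf\,\Bbf^\dagger\right) = \tr\!\left(\Bbf(\Bbf^T\Bbf)^{-1}\Abf^T\Abf(\Bbf^T\Bbf)^{-1}\Bbf^T\right) = \tr\!\left(\Abf^T\Abf\,(\Bbf^T\Bbf)^{-1}\right),
\end{equation*}
where the last step cycles $\Bbf^T$ to the front so that the trailing factor $(\Bbf^T\Bbf)^{-1}\Bbf^T\Bbf$ becomes the identity. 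It is worth noting that this intermediate identity $\norm{\Abf\Bbf^\dagger}_F^2=\tr(\Abf^T\Abf\,(\Bbf^T\Bbf)^{-1})$ holds for any full-column-rank $\Abf,\Bbf$; the common eigenbasis is not needed yet.

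The final step substitutes $\Abf^T\Abf=\Vbf^T\lambda_A\Vbf$ and, since $\Vbf$ is square with $\Vbf^T\Vbf=\Ibf$ and is therefore orthogonal, $(\Bbf^T\Bbf)^{-1}=\Vbf^T\lambda_B^{-1}\Vbf$. Then, using $\Vbf\Vbf^T=\Ibf$ and cyclic invariance once more,
\begin{equation*}
\tr\!\left(\Vbf^T\lambda_A\Vbf\,\Vbf^T\lambda_B^{-1}\Vbf\right) = \tr\!\left(\lambda_A\lambda_B^{-1}\Vbf\Vbf^T\right) = \tr\!\left(\lambda_A\lambda_B^{-1}\right) = \sum_{i=1}^d \frac{\lambda_A(i)}{\lambda_B(i)},
\end{equation*}
which is the claim; the sum is well defined because $(\Bbf^T\Bbf)^{-1}$ exists, so each $\lambda_B(i)>0$. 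There is no genuine obstacle here: the argument is routine linear algebra (symmetry of $(\Bbf^T\Bbf)^{-1}$, cyclicity of the trace, $\Vbf\Vbf^T=\Ibf$). The one point requiring care is that the conclusion rests entirely on $\Abf^T\Abf$ and $\Bbf^T\Bbf$ being simultaneously diagonalized by the same orthogonal $\Vbf$, which is precisely the standing hypothesis of the proposition and is exactly the place where it is used.
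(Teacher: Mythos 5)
Your proof is correct and follows essentially the same route as the paper's: rewrite the squared Frobenius norm as a trace, substitute $\Bbf^\dagger=(\Bbf^T\Bbf)^{-1}\Bbf^T$, cancel via cyclic invariance to reach $\tr\left(\Abf^T\Abf(\Bbf^T\Bbf)^{-1}\right)$, and only then insert the shared eigendecomposition. Your explicit remark that the intermediate trace identity needs no common-basis assumption is a nice clarification, but the argument is otherwise identical.
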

\begin{proof}
%It can be shown that right multiplying both $\Abf$ and $\Bbf$
%by the eigen vector matrix does not change the Frobenius norm.
%First, for reference is the calculation of the norm of $\Abf \Bbf^\dagger$.
\begin{align*}
\left| \left| \Abf \Bbf^\dagger \right| \right|_F^2
%  \tr \left( \Bbf^{\dagger T} \Abf^T \Abf \Bbf^\dagger \right)\\
%  &= \tr \left( \left( \left( \Bbf^T \Bbf \right)^{-1} \Bbf^T \right)^T
%    \Abf^T \Abf \left( \Bbf^T \Bbf \right)^{-1} \Bbf^T \right) \\
  &= \tr \left( \Bbf \left(\Bbf^T \Bbf \right)^{-1} \Abf^T \Abf 
    \left(\Bbf^T \Bbf \right)^{-1} \Bbf^T \right) \\
  &= \tr \left( \Bbf^T \Bbf \left(\Bbf^T \Bbf \right)^{-1} \Abf^T 
    \Abf \left(\Bbf^T \Bbf \right)^{-1} \right) \numberthis \label{align:cyc_perm1} \\
  &= \tr \left( \Abf^T \Abf \left(\Bbf^T \Bbf \right)^{-1} \right) \numberthis \label{align:start_here}\\
%  &= \tr \left( \Vbf^T \lambda_A \Vbf \left( \Vbf^T \lambda_A \Vbf \right)^{-1}
%    \right) \\
  &= \tr \left( \Vbf^T \lambda_A \Vbf \Vbf^T \lambda_B^{-1} \Vbf \right) \\
  &= \tr \left( \Vbf \Vbf^T \lambda_A \lambda_B^{-1} \right) \numberthis \label{align:cyc_perm2} \\
  &= \tr \left(\lambda_A \lambda_B^{-1} \right)
\end{align*}
Steps \ref{align:cyc_perm1} and \ref{align:cyc_perm2} follow from the 
cyclic permutation property of the trace. 
\end{proof}

\begin{corollary} \label{cor:ortho_proj_equiv}
The matrix concordance remains unchanged if both $\Abf$ and
$\Bbf$ are right-multiplied by any orthonormal matrix, in particular
the eigenvector matrix $\Vbf$.
\end{corollary}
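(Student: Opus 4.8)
The plan is to reduce the claim to the trace identity obtained inside the proof of Proposition~\ref{prop:trace_equivalence}. Recall from step~\eqref{align:start_here} of that proof that, whenever the concordance hypotheses hold for a pair of matrices, $\left|\left|\Abf\Bbf^\dagger\right|\right|_F^2 = \tr\left(\Abf^T\Abf\left(\Bbf^T\Bbf\right)^{-1}\right)$. So it suffices to show that the right-hand side, together with the scalar prefactor $n/(dm)$ in~\eqref{eqn:rel_stability}, is invariant under right multiplication of both $\Abf$ and $\Bbf$ by a common orthonormal matrix.

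First I would fix an arbitrary $d\times d$ orthonormal matrix $\Ubf$, so that $\Ubf^T\Ubf=\Ubf\Ubf^T=\Ibf$ and $\Ubf^{-1}=\Ubf^T$, and verify that $\left(\Abf\Ubf,\Bbf\Ubf\right)$ again satisfies the hypotheses in the concordance definition. Indeed $\left(\Bbf\Ubf\right)^T\left(\Bbf\Ubf\right)=\Ubf^T\left(\Bbf^T\Bbf\right)\Ubf$ is invertible exactly when $\Bbf^T\Bbf$ is, with inverse $\Ubf^T\left(\Bbf^T\Bbf\right)^{-1}\Ubf$; moreover $\left(\Abf\Ubf\right)^T\left(\Abf\Ubf\right)=\left(\Vbf\Ubf\right)^T\lambda_A\left(\Vbf\Ubf\right)$ and $\left(\Bbf\Ubf\right)^T\left(\Bbf\Ubf\right)=\left(\Vbf\Ubf\right)^T\lambda_B\left(\Vbf\Ubf\right)$, where $\Vbf\Ubf$ is orthonormal and the eigenvalue matrices $\lambda_A,\lambda_B$ are unchanged. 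Since the row counts $n,m$ and the column count $d$ are also unchanged, the prefactor is the same and $S(\Abf\Ubf,\Bbf\Ubf)$ is well defined.

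Next I would apply step~\eqref{align:start_here} to the pair $(\Abf\Ubf,\Bbf\Ubf)$ and simplify:
\begin{align*}
\left|\left|\Abf\Ubf\left(\Bbf\Ubf\right)^\dagger\right|\right|_F^2
  &= \tr\left(\left(\Abf\Ubf\right)^T\left(\Abf\Ubf\right)\left(\left(\Bbf\Ubf\right)^T\left(\Bbf\Ubf\right)\right)^{-1}\right)\\
  &= \tr\left(\Ubf^T\Abf^T\Abf\,\Ubf\,\Ubf^T\left(\Bbf^T\Bbf\right)^{-1}\Ubf\right)\\
  &= \tr\left(\Ubf^T\Abf^T\Abf\left(\Bbf^T\Bbf\right)^{-1}\Ubf\right)\\
  &= \tr\left(\Abf^T\Abf\left(\Bbf^T\Bbf\right)^{-1}\right)
   = \left|\left|\Abf\Bbf^\dagger\right|\right|_F^2,
\end{align*}
where the third equality uses $\Ubf\Ubf^T=\Ibf$ and the fourth uses the cyclic permutation property of the trace, exactly as in step~\eqref{align:cyc_perm2}. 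Multiplying by $n/(dm)$ gives $S(\Abf\Ubf,\Bbf\Ubf)=S(\Abf,\Bbf)$. The case singled out in the statement is $\Ubf=\Vbf$ (or $\Vbf^T$); taking $\Ubf=\Vbf^T$, the Gram matrices $\left(\Abf\Vbf^T\right)^T\left(\Abf\Vbf^T\right)=\lambda_A$ and $\left(\Bbf\Vbf^T\right)^T\left(\Bbf\Vbf^T\right)=\lambda_B$ become diagonal, so the concordance may always be evaluated on a pair whose Gram matrices are diagonal.

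I do not anticipate a genuine obstacle here: the only substantive facts are $\Ubf^{-1}=\Ubf^T$ and the cyclic invariance of the trace, both already in hand from Proposition~\ref{prop:trace_equivalence}. The one point that needs care is the preliminary bookkeeping in the second paragraph --- confirming that right multiplication by $\Ubf$ preserves invertibility of the Gram matrices and the shared eigenbasis $\Vbf$ --- so that $S(\Abf\Ubf,\Bbf\Ubf)$ is meaningful before any manipulation begins.
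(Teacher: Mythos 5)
Your proof is correct. It differs slightly in route from the paper's: the paper disposes of the corollary in one line by observing that $(\Bbf\Vbf)^\dagger=\Vbf^T\Bbf^\dagger$, so that $\Abf\Vbf(\Bbf\Vbf)^\dagger=\Abf\Vbf\Vbf^T\Bbf^\dagger=\Abf\Bbf^\dagger$ as matrices --- the two Frobenius norms are then norms of the \emph{same} matrix and nothing further is needed. You instead pass through the trace identity $\left|\left|\Abf\Bbf^\dagger\right|\right|_F^2=\tr\left(\Abf^T\Abf\left(\Bbf^T\Bbf\right)^{-1}\right)$ from the proof of Proposition~\ref{prop:trace_equivalence} and use cyclic invariance to cancel the $\Ubf$'s. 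Both arguments ultimately rest on $\Ubf\Ubf^T=\Ibf$; the paper's is the more economical since equality of the underlying matrices gives equality of any unitarily invariant norm for free, while yours buys two things the paper leaves implicit: it works verbatim for an arbitrary orthonormal $\Ubf$ rather than just $\Vbf$ (matching the "any orthonormal matrix" wording of the statement), and it explicitly checks that the pair $(\Abf\Ubf,\Bbf\Ubf)$ still satisfies the hypotheses of the concordance definition --- invertibility of the Gram matrices and a shared eigenbasis --- so that $S(\Abf\Ubf,\Bbf\Ubf)$ is well defined before being compared. That bookkeeping is a genuine (if small) point of care that the paper skips.
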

The corollary follows directly from
$\| \Abf \Vbf (\Bbf \Vbf)^\dagger \|_F^2 = 
\| \Abf \Vbf \Vbf^T \Bbf^\dagger \|_F^2 =  \| \Abf \Bbf^\dagger \|_F^2.$
%\begin{align*}
%\left| \left| \Abf \Vbf \left(\Bbf \Vbf \right)^\dagger \right| \right|^2_F &=
%  \tr\left( \Bbf \left(\Bbf^T \Bbf \right)^{-1} \Vbf \Vbf^T \Abf^T 
%    \Abf \Vbf \Vbf^T \left( \Bbf^T \Bbf \right)^{-1} \Bbf^T \right) \\
%  &= \tr \left( \Bbf^T \Bbf \left(\Bbf^T \Bbf \right)^{-1} \Vbf \Vbf^T 
%    \Abf^T \Abf \Vbf \Vbf^T \left( \Bbf^T \Bbf \right)^{-1} \right)  \\
%  &= \tr \left( \Abf^T \Abf \left(\Bbf^T \Bbf \right)^{-1} \right) \\
%\end{align*}
%The result follows by starting on Equation \ref{align:start_here} in Proposition
%\ref{prop:trace_equivalence}.
Proposition \ref{prop:trace_equivalence} and Corollary
\ref{cor:ortho_proj_equiv} require that eigenvectors of the scatter matrices of
$\Abf$ and $\Bbf$ are the same to show that the concordance of their
variance-covariance structure only depends on their eigenvalues.

%XXXXXXXXXXXXXXXXXXXXXXXXXXXXXXXXXXXXXXXXXXXXXXXXXXXXXXXXXXXXXXXXXXXXXXXXX
%XXX Add a comment here that the scatter matrix eigenvectors for matrices formed
%by a subset of rows approaches the full matrix scatter matrix eigenvector basis.
% this approach should be quantified and built it in to the statistical
%results in the next section.
%XXXXXXXXXXXXXXXXXXXXXXXXXXXXXXXXXXXXXXXXXXXXXXXXXXXXXXXXXXXXXXXXXXXX

The matrix concordance for matrices sharing a common scatter matrix
eigenvector basis is the ratios of the eigenvalues of their scatter
matrices.  Applied to data-analytic statistical challenges, concordance between
two data matrices can be analyzed to see if both are samples from the same
distribution. Furthermore concordance allows us to estimate how well a sample
from a data set captures the variance-covariance structure of the entire data
set.

\section{Deriving the Ratio Distributions from the Concordance Statistic}

The previous section proposed a measure of concordance between two matrices
with the same variance-covariance structure based on the Frobenius norm of 
one matrix normalized by the pseudo-inverse of the other. Furthermore, it was 
shown that the proposed concordance measure is preserved when each of the two 
matrices is right-multiplied by the eigenvector matrix of the shared 
variance-covariance matrix. In this section, matrices will be assumed to
be drawn at random from a specified distribution and the concordance's
distribution will be derived.

% XXX XXX THE FOLLOWING TWO PARAGRAPHS AND PROPOSITION HAVE PROBLEMS!t
In particular, it will be assumed that the data to be analyzed is drawn
from some distribution with zero mean and known 
variance-covariance matrix $\Sigma$.
A data set with $n$ samples will be denoted $\Xbf_{\left[n\right]}$ 
indicating that the data set is made up of the set of samples from
1 to $n$. By introducing this absolute index to the samples we can
easily express the first $i$ samples in $\Xbf_{\left[n\right]}$ as
$\Xbf_{\left[i\right]}$ for $i \leq n$. Likewise, we can express 
all of the samples except the first $i$ as  $\Xbf_{\left[n\right] \setminus
\left[i\right]}$. When the concordance is calculated between an entire
data set and a subset it will be referred to as {\em overlapping} and
when the data sets are disjoint they will be referred to as
{\em non-overlapping}.

The concordance $S\left(\Xbf_{\left[i \right]}, \Xbf_{\left[n\right]}
\right)$ normalizes the variance-covariance matrix of $\Xbf_{\left[i \right]}$
by $\Xbf_{\left[n\right]}$. By Corollary \ref{cor:ortho_proj_equiv} this
is equivalent to projecting the data onto the common orthonormal column
basis. The resulting orthonormalized variance-covariance matrix has 
zero expected values for all off-diagonal elements. The diagonal elements
are the ratios of the common variance estimates, which reduce to a
sum of random variables centered at one with standardized dispersion.

%
% The zero off-diagonal statement is not correct. The following is more
% precise, but then the result is not correct:
%In particular, assume that $n$ $d$-dimensional samples are drawn from a
%multivariate distribution with zero mean and known $d \times d$
%variance-covariance matrix $\Sigma$. Assemble the samples into an $n\times d$
%data matrix denoted as $\Xbf_{\left[n\right]}$. Let $1 \le i \le n$ and denote
%the first $i$ sample rows in $\Xbf_{\left[n\right]}$ as
%$\Xbf_{\left[i\right]}$. Similarly denote all of the samples except the first
%$i$ as $\Xbf_{\left[n\right] \setminus \left[i\right]}$.

%Let $\Xbf_{[n]} \Vbf = \Ubf \Lambdabf$ be the  singular value decomposition of
%the full data matrix $\Xbf_{[n]}$, where $\Ubf$ is an $n\times d$ matrix with
%orthonormal columns, $\Vbf$ is a $d\times d$ orthonormal matrix whose columns
%are the eigenvectors of the scatter matrix $\Xbf_{[n]}^T \Xbf_{[n]}$, and
%$\Lambdabf$ is a diagonal matrix of the square roots of the eigenvalues of the
%scatter matrix.  Note that the columns of $\Xbf_{[n]}\Vbf$ are independent, and
%if we write $\Mbf = \Xbf_{[n]}\Vbf$ then
%\[
%\sum_{i=1}^n \Mbf_{[i,k]}^2 = \Lambdabf_{[k,k]}^2,
%\]
%where $k=1, 2, \ldots, d$ and $\Mbf_{[i,j]}$ means the element in the $i,j$-th
%position of the matrix $\Mbf$.

%XXX XXX XXX
%The first equation in the proof of this proposition is wrong (eq. 10). It
%can be fixed by replacing the equals sign with an approximately equals one,
%but then the stat results are maybe quite a bit weaker.
%XXX XXX XXX
\begin{prop} \label{prop:beta_converge}
Suppose that $\Xbf_{\left[n\right]}$ is sampled from an
i.i.d. multivariate normal distribution.  Then for sufficiently large
$d$
\begin{equation*}
S \left(\Xbf_{\left[i\right]}, \Xbf_{\left[n\right]} \right) 
  \approxdist \mathcal{N}\left( 1, \frac{2(n-i)}{di(n+2)} \right)
%\sim 
  %\frac{n}{i} \Beta\left( \frac{i}{2}, \frac{n-i}{2} \right).
\end{equation*}
where $\mathcal{N}$ is a normal distribution with specified mean
and variance parameters.
\end{prop}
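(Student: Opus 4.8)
The plan is to use the reduction already performed in the proof of Proposition~\ref{prop:trace_equivalence} to express $S$ as an average of $d$ one-dimensional variance ratios, to recognise each ratio as a rescaled $\Beta$ variable, and then to apply the central limit theorem in the dimension $d$.

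First I would note that the opening steps of the proof of Proposition~\ref{prop:trace_equivalence}, which do not use the shared-eigenbasis hypothesis, give the identity $\|\Abf\Bbf^\dagger\|_F^2=\tr\!\bigl(\Abf^T\Abf(\Bbf^T\Bbf)^{-1}\bigr)$; hence
\begin{equation*}
S\bigl(\Xbf_{[i]},\Xbf_{[n]}\bigr)=\frac{n}{di}\,\tr\!\left(\Xbf_{[i]}^T\Xbf_{[i]}\bigl(\Xbf_{[n]}^T\Xbf_{[n]}\bigr)^{-1}\right).
\end{equation*}
Writing the population covariance as $\Vbf^T\Lambdabf\Vbf$ and replacing $\Xbf_{[i]},\Xbf_{[n]}$ by $\Xbf_{[i]}\Vbf,\Xbf_{[n]}\Vbf$ leaves $S$ unchanged by Corollary~\ref{cor:ortho_proj_equiv}, but in the new coordinates the $d$ columns of every row are independent. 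There the $j$th diagonal entry of $\Xbf_{[i]}^T\Xbf_{[i]}$ equals $\sigma_j^2 U_j$ with $U_j\sim\chi^2_i$, the $j$th diagonal entry of $\Xbf_{[n]}^T\Xbf_{[n]}$ equals $\sigma_j^2 V_j$ with $V_j\sim\chi^2_n$, and in the overlapping case $V_j-U_j\sim\chi^2_{n-i}$ is independent of $U_j$, so $U_j/V_j\sim\Beta(i/2,(n-i)/2)$; at the same time every off-diagonal entry of the orthonormalised scatter ratio has mean zero. Discarding the mean-zero off-diagonal part and modelling coordinate $j$'s contribution by the ratio of its marginal variance estimates gives the working approximation $S(\Xbf_{[i]},\Xbf_{[n]})\approx\frac1d\sum_{j=1}^d\xi_j$ with $\xi_j=(n/i)\,U_j/V_j$.

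Next I would compute the two moments of a single $\xi_j$. From $\mathrm{E}[\Beta(i/2,(n-i)/2)]=i/n$ we get $\mathrm{E}[\xi_j]=1$, and from $\mathrm{Var}[\Beta(a,b)]=ab/\bigl((a+b)^2(a+b+1)\bigr)$ with $a=i/2$, $b=(n-i)/2$ we get $\mathrm{Var}[\xi_j]=\tfrac{n^2}{i^2}\cdot\tfrac{2i(n-i)}{n^2(n+2)}=\tfrac{2(n-i)}{i(n+2)}$. Treating the $\xi_j$ as independent and identically distributed over the $d$ coordinates and invoking the ordinary central limit theorem, $\tfrac1d\sum_{j\le d}\xi_j$ is, for large $d$, approximately normal with mean $1$ and variance $\tfrac1d\mathrm{Var}[\xi_1]=\tfrac{2(n-i)}{di(n+2)}$, which is the assertion.

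The delicate step is the passage to the independent diagonal model. After orthonormalisation the sample scatter matrices are still full random Wisharts, so $S$ is really a linear spectral statistic of the Jacobi (MANOVA) matrix $\mathbf U\sim\Beta_d(i/2,(n-i)/2)$ rather than a true average of independent $\Beta$ variables. The mean is exact: $\mathrm{E}[\tr\mathbf U]=di/n$, equivalently $\mathrm{E}[S]=1$ via the leverage identity $\|\Xbf_{[i]}\Xbf_{[n]}^\dagger\|_F^2=\sum_{k=1}^i h_{kk}$ for the full-data hat matrix. The variance, however, is not: using $\sum_{k=1}^n h_{kk}=d$, exchangeability of the $h_{kk}$, and $h_{kk}\sim\Beta(d/2,(n-d)/2)$, one gets the exact value $\mathrm{Var}[S]=\tfrac{2(n-i)(n-d)}{di(n-1)(n+2)}$, which matches the stated variance only up to the factor $(n-d)/(n-1)$. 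The proposition is therefore sharp precisely in the regime $d\to\infty$ with $d/n\to0$, where a central limit theorem for linear eigenvalue statistics of the Jacobi ensemble also supplies the Gaussian limit; establishing that limit theorem rigorously, rather than relying on the heuristic independence of the coordinate ratios, is the main obstacle.
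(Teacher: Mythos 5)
Your proposal follows essentially the same route as the paper: reduce $S$ to an average of $d$ diagonal variance ratios via the trace identity and the common eigenbasis, identify each ratio as $(n/i)\,\Beta(i/2,(n-i)/2)$ because the numerator $\chi^2_i$ is nested inside the denominator $\chi^2_n$, and apply the central limit theorem in $d$. You are in fact more complete than the paper, which omits the Beta moment computation needed to obtain the stated variance and silently makes the same independent-diagonal approximation you explicitly flag --- the paper's displayed ``equality'' expressing $S$ as a sum of diagonal ratios already discards the off-diagonal Wishart structure you identify as the delicate step.
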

\begin{proof}
The normed matrices in Equation \ref{eqn:rel_stability} reduce to $d$ sums of 
$n$ squared standard normals. Furthermore $i$ of the samples are repeated 
in $\Xbf_{\left[i\right]}$ and $\Xbf_{\left[n\right]}$.
Then the concordance can be expressed as:
\begin{align*}
S \left(\Xbf_{\left[i\right]}, \Xbf_{\left[n\right]} \right) 
&=\frac{1}{d} \sum_{j=1}^d 
\frac{ \frac{1}{i} \sum_{k=1}^i \Mbf[k,j]^2 }
  {\frac{1}{n} \sum_{k=1}^n \Mbf[k,j]^2 } \numberthis \label{align:sim_with_m}\\
  &=\frac{1}{d} \sum_{j=1}^d 
  \frac{ \frac{1}{i} \sum_{k=1}^i \sigma_j Z^2_k }
  {\frac{1}{n} \sum_{k=1}^n \sigma_j Z^2_k }  \\
  &=\frac{1}{d} \sum_{j=1}^d 
  \frac{ \frac{1}{i} \sum_{k=1}^i Z^2_k }
  {\frac{1}{n} \sum_{k=1}^n Z^2_k }  \numberthis \label{align:ratio}
\end{align*}
where $\Mbf = \Xbf_{[n]}\Vbf$ and
$\sigma_j$ is the $j$th eigenvalue of $\Sigma$ and $Z_k$ is distributed
as standard normal. Each of the $d$ terms in the summation are a ratio
$\chi^2$ random variables. The degrees of freedom in the numerator and
denominator are equal to $i$ and $n$ respectively with $i$ of the sampled
random variables appearing in both the numerator and denominator.
The ratio of $\chi^2$ distributions, where samples are repeated in the 
denominator, is distributed as Beta. The result follows by applying
the central limit theorem to the sample mean of $d$ independent Beta
distributions each of which are multiplied by the constant $n/i$.
\end{proof}

\begin{prop} \label{prop:f_converge}
Assume the same conditions as in Proposition \ref{prop:beta_converge} along with
the added condition that $n >> i >> 2$. Then 
\begin{equation*}
S \left(\Xbf_{\left[i\right]}, \Xbf_{\left[n\right] \setminus \left[i\right]} 
  \right) \approxdist \mathcal{N}\left(1, \frac{2n}{i}\right)
%\sim F \left(i, n-i\right).
\end{equation*}
\end{prop}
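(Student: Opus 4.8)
The plan is to reduce the non-overlapping concordance to a clean ratio of independent chi-squared variables and then invoke an $F$-distribution moment argument, exactly mirroring the structure of the proof of Proposition \ref{prop:beta_converge}. First I would apply Corollary \ref{cor:ortho_proj_equiv} to project both $\Xbf_{[i]}$ and $\Xbf_{[n]\setminus[i]}$ onto the common eigenvector basis $\Vbf$, so that the scatter matrices become (in expectation) diagonal and the concordance splits as
\begin{equation*}
S\left(\Xbf_{[i]},\Xbf_{[n]\setminus[i]}\right)
= \frac{1}{d}\sum_{j=1}^d
\frac{\frac{1}{i}\sum_{k=1}^i Z_k^2}{\frac{1}{n-i}\sum_{k=i+1}^{n} Z_k^2},
\end{equation*}
where the $Z_k$ are i.i.d.\ standard normal and — crucially, unlike the overlapping case — the numerator and denominator sums now involve \emph{disjoint} index sets. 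Hence each of the $d$ summands is $(n-i)/i$ times an independent $F_{i,\,n-i}$ random variable.

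Next I would compute the mean and variance of a single term. Using the standard formulas, $\mathbb{E}[F_{i,n-i}] = \frac{n-i}{n-i-2}$ and $\mathrm{Var}(F_{i,n-i}) = \frac{2(n-i)^2(n-2)}{i(n-i-2)^2(n-i-4)}$, so the $j$th summand $T_j := \frac{i}{n-i}F_{i,n-i}$ has $\mathbb{E}[T_j] = \frac{i}{n-i-2}\cdot\frac{n-i}{i}$; wait — more carefully, $\mathbb{E}[T_j] = \frac{i}{n-i}\cdot\frac{n-i}{n-i-2} = \frac{i}{n-i-2}$, which is not what we want, so I would instead normalize by writing each term as $\frac{n}{i}\cdot\frac{\frac{1}{n}\sum_{k=1}^i Z_k^2/(n/i)\cdots}{}$ — rather than chase constants here, the cleaner route is: under $n\gg i\gg 2$ the ratio $\frac{\frac{1}{i}\sum_{k=1}^i Z_k^2}{\frac{1}{n-i}\sum_{k=i+1}^n Z_k^2}$ has mean $\to 1$ (denominator concentrates at $1$ by LLN since $n-i$ is large, numerator has mean $1$) and, by the delta method applied to the ratio, variance $\approx \mathrm{Var}\!\left(\tfrac{1}{i}\sum Z_k^2\right) + \mathrm{Var}\!\left(\tfrac{1}{n-i}\sum Z_k^2\right) = \frac{2}{i} + \frac{2}{n-i}\approx \frac{2}{i}$. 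Averaging $d$ such independent terms and applying the central limit theorem then gives mean $1$ and variance $\frac{1}{d}\cdot\frac{2}{i} = \frac{2}{di}$; to land on the stated variance $\frac{2n}{i}$ I would \emph{not} average but rather track the overall normalization in Equation \ref{eqn:rel_stability} — the factor $n/(d m)$ with $m = n-i$ rescales things so the final quoted variance is $\frac{2(n-i)}{i}\cdot\frac{n}{n-i}\big/\,(\text{something})\to\frac{2n}{i}$ in the regime $n\gg i$. I would pin down this bookkeeping carefully and then state: average of independent $F$-type terms, CLT in $d$, asymptotics in $n,i$.

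The main obstacle — and the step I would spend the most care on — is reconciling the normalization constants so that the variance comes out as exactly $\frac{2n}{i}$ rather than something of order $1/d$ or $\frac{2}{i}$. This hinges on exactly how the prefactor $\frac{n}{dm}$ in Equation \ref{eqn:rel_stability} interacts with the per-column averaging and on which quantities are being treated as large; the hypothesis $n \gg i \gg 2$ is what lets the denominator chi-squared concentrate and lets us drop lower-order terms like $n-i-2 \approx n-i$ and $\frac{2}{n-i}$ relative to $\frac{2}{i}$. I would also need to justify that the $d$ column-terms are genuinely independent after the orthonormal projection (they share no $Z_k$'s across columns because each column of $\Mbf = \Xbf\Vbf$ draws a fresh block of Gaussians), which is what licenses the CLT; this is the same independence claim used in the proof of Proposition \ref{prop:beta_converge}, so it can be imported with only a remark. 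Everything else is routine manipulation of chi-squared and $F$ moment formulas.
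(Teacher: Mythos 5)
Your proposal takes essentially the same route as the paper: reduce the non-overlapping concordance to $\frac{1}{d}\sum_{j=1}^d$ of ratios of independent $\chi^2$ variables normalized by their degrees of freedom, identify each term as $F(i,\,n-i)$ (up to the stated scaling), and apply the central limit theorem over the $d$ columns; the paper's own proof does exactly this and, in fact, stops short of the moment computation you carry out. The normalization worry you flag is not a defect in your argument: your value $\tfrac{2}{di}$ for the variance of the average agrees with the paper's Table \ref{table:models} entry $\tfrac{2n}{di(n-i)}$ in the regime $n \gg i$, and the variance $\tfrac{2n}{i}$ printed in the proposition statement is inconsistent with both that table and the paper's own derivation, so it appears to be a misprint rather than something you need to reproduce.
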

\begin{proof}
By definition, the concordance is the same as in Equation \ref{align:sim_with_m}
except that the summation in the denominator goes from $i+1$ to $n$.
As a result we get a result similar to Equation \ref{align:ratio} where
\begin{equation}
S \left(\Xbf_{\left[i\right]}, \Xbf_{\left[n\right]} \right) =
  \frac{1}{d} \sum_{j=1}^d \frac{ \frac{1}{i} \sum_{k=1}^i Z^2_k }
  {\frac{1}{n-i} \sum_{k=i+1}^n Z^2_k }.
\end{equation}
The ratio of independent 
$\chi^2$ distributions, where the numerator and denominator are
normalized by their respective degrees of freedom, is $F$ with 
$i$ and $n-i$. The result follows by applying the central limit theorem
to the sample mean of $d$ independent $F$ distributions.
\end{proof}

\begin{prop} \label{prop:cauchy_converge}

Suppose that $\Xbf_{\left[ n\right]}$ is sampled from some distribution
with constant mean and variance-covariance.
Let $\Mbf$ be $\Xbf_{\left[ n\right]} \Vbf$ as before.
If $1 \leq j \leq d$, $Z$ is standard normal, and the following 
joint convergence holds
\begin{equation*} \label{eqn:joint_converge}
\left[ \frac{1}{n-i} \sum_{k=i+1}^{n} \Mbf[k,j]^2 , 
       \frac{1}{i} \sum_{k=1}^{i} \Mbf[k,j]^2  \right] \rightarrow_d
  \left[ 1 + \frac{\sigma_j}{\sqrt{n-i}} Z, 1+ \frac{\sigma_j}{\sqrt{i}} Z \right].
\end{equation*}
Then the concordance is approximately distributed as Cauchy with 
location parameter 1 and scale parameter $\sqrt{(n-i)/i}$.

\begin{equation}
S \left(\Xbf_{\left[i\right]}, \Xbf_{\left[n\right] \setminus \left[i\right]} \right) 
  \approxdist Cauchy\left(1, \sqrt{\frac{n-i}{i}}  \right)
\end{equation}
\end{prop}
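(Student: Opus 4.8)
The plan is to follow the template of Proposition \ref{prop:f_converge}: reduce the concordance to a normalized sum of independent ratios, pass to the limit term by term using the stated joint convergence, identify each limit as a Cauchy variable, and then close the argument with the stability of the Cauchy family under averaging.

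First I would use Corollary \ref{cor:ortho_proj_equiv} to replace $\Xbf_{[i]}$ and $\Xbf_{[n]\setminus[i]}$ by $\Xbf_{[i]}\Vbf$ and $\Xbf_{[n]\setminus[i]}\Vbf$, which leaves $S$ unchanged. Writing $\Mbf=\Xbf_{[n]}\Vbf$ and using that the orthonormalized scatter matrices have vanishing expected off-diagonal entries, the non-overlapping concordance collapses, exactly as in the proof of Proposition \ref{prop:f_converge}, to
\begin{equation*}
S\!\left(\Xbf_{[i]},\Xbf_{[n]\setminus[i]}\right)=\frac{1}{d}\sum_{j=1}^{d}\frac{A_j}{B_j},\qquad A_j=\frac{1}{i}\sum_{k=1}^{i}\Mbf[k,j]^2,\quad B_j=\frac{1}{n-i}\sum_{k=i+1}^{n}\Mbf[k,j]^2 .
\end{equation*}
The structural point I would emphasize is that, for each $j$, the numerator $A_j$ and denominator $B_j$ are built from disjoint sets of rows and are therefore independent; under the working assumption that the columns of $\Mbf$ decouple, the $d$ ratios $A_j/B_j$ are moreover (asymptotically) independent of one another.

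Next I would apply the hypothesized joint convergence together with the continuous mapping theorem for the map $(b,a)\mapsto a/b$ (continuous wherever $b\neq 0$, and $B_j$ concentrates near its mean $1$). Reading the two coordinates of the limit as independent Gaussians — the natural interpretation since $A_j$ and $B_j$ come from disjoint rows — this gives, for each $j$,
\begin{equation*}
\frac{A_j}{B_j}\ \rightarrow_d\ \frac{1+(\sigma_j/\sqrt{i})\,Z_1}{1+(\sigma_j/\sqrt{n-i})\,Z_2},
\end{equation*}
a ratio of two independent normals with common mean $1$ and standard deviations $\sigma_j/\sqrt{i}$ and $\sigma_j/\sqrt{n-i}$. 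Matching the parameters of this ratio law, the location is the ratio of the means, $1$, and the scale is the ratio of the standard deviations,
\begin{equation*}
\frac{\sigma_j/\sqrt{i}}{\sigma_j/\sqrt{n-i}}=\sqrt{\frac{n-i}{i}},
\end{equation*}
in which $\sigma_j$ cancels, so every term has the \emph{same} limiting law $\Cauchy\!\left(1,\sqrt{(n-i)/i}\right)$. Finally, the average of $d$ independent copies of a $\Cauchy(1,s)$ variable is again exactly $\Cauchy(1,s)$, since the Cauchy family is closed under convolution ($\Cauchy(\mu_1,s_1)$ plus $\Cauchy(\mu_2,s_2)$ is $\Cauchy(\mu_1+\mu_2,s_1+s_2)$) and under positive scaling; taking $s=\sqrt{(n-i)/i}$ yields the claim.

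The hard part — and the place where the $\approxdist$ symbol is doing the real work — is identifying the limiting ratio as a Cauchy. A ratio of two independent normals is exactly Cauchy only when both are centred at the origin; with a nonzero common mean the exact law is the Marsaglia--Hinkley ratio distribution, which is Cauchy-like only in its tails and only when the denominator's coefficient of variation is not small, so "location $=$ ratio of means, scale $=$ ratio of standard deviations'' is an approximation rather than an identity. Making this precise would require either isolating the regime in which the approximation is accurate or invoking the generalized (stable) central limit theorem for the heavy-tailed sum $\tfrac1d\sum_j A_j/B_j$ and reading off the Cauchy parameters from the tail behaviour directly. A secondary point needing care is the across-column independence used in the averaging step: for i.i.d.\ multivariate normal data it is automatic because the columns of $\Mbf=\Xbf_{[n]}\Vbf$ are independent, but for the general distribution of the proposition it must be supplied by (a strengthening of) the joint-convergence hypothesis.
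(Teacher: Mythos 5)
Your proof follows the paper's argument essentially step for step: reduce the non-overlapping concordance to $\frac{1}{d}\sum_{j=1}^d A_j/B_j$, apply the continuous mapping theorem to the hypothesized joint convergence, identify each limiting ratio of normals with common location as a $\Cauchy\left(1,\sqrt{(n-i)/i}\right)$ variable, and close with the stability of the Cauchy family under averaging of i.i.d.\ copies. The one place you depart from the paper is in reading the limit with two \emph{independent} Gaussians $Z_1,Z_2$, whereas the stated hypothesis (and the paper's own display) uses the \emph{same} $Z$ in both coordinates --- under which the ratio would be degenerate --- so your reading is the charitable, evidently intended one; likewise, the caveats you flag (the ratio of non-centred normals is Marsaglia--Hinkley rather than exactly Cauchy, and across-column independence must be assumed for the averaging step) are gaps present in the paper's proof as well, which it papers over with the $\approxdist$ notation.
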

\begin{proof}
\begin{align*}
S \left(\Xbf_{\left[i\right]}, \Xbf_{\left[n\right]} \right) 
&=\frac{1}{d} \sum_{j=1}^d 
\frac{ \frac{1}{i} \sum_{k=1}^i \Mbf[k,j]^2 }
  {\frac{1}{n-i} \sum_{k=i+1}^n \Mbf[k,j]^2 } \\
  &\rightarrow_d \frac{1}{d} \sum_{j=1}^d \frac{ \sigma_j Z/\sqrt{i} }
    { \sigma_j Z / \sqrt{n-i}} \numberthis \label{align:cmt}\\
  &= \frac{1}{d} \sum_{j=1}^d \frac{ \sqrt{n-i} Z } {  \sqrt{i} Z } 
\end{align*}
Each term in the summation is a ratio of two normal random variables.
Equation \ref{align:cmt} follows from the joint convergence assumption 
(Equation \ref{eqn:joint_converge}) and the application of the 
continuous mapping theorem. The 
ratio of two normally distributed random variables, with common location
parameter, is distributed as Cauchy. The result follows by realizing 
that the sample mean of independent Cauchy distributions, with common
location and scale parameters is Cauchy with the same location and
scale parameter.
\end{proof}

The scale and location values based on the distributional results are 
shown in Table \ref{table:models}.
It may be noted that, while the Cauchy approximation relies on fewer
distributional assumptions it is also less applicable. In particular
its support includes the negative reals. This is a result of 
taking the ratio of central limit theorem approximation of two random 
variables, as shown in Equation \ref{align:cmt}. However, this derivation
may suggest that, when the normal distribution assumptions do not hold
the resulting concordance distribution is heavy-tailed.

\begin{table}[h]
\begin{center}
\begin{tabular}{|c|c|c|c|} \hline
{\bf Model} & {\bf Location} & {\bf Scale} & \bf{Approx. Concordance} \\ 
& & & {\bf Distr.} \\ \hline 
$\frac{n}{i}Beta\left( \frac{i}{2}, \frac{n-i}{2}\right)$ & 1 & $\frac{2 \left( n-i \right)}{i \left( n+2 \right)}$ & 
  $\mathcal{N}\left(1, \frac{2(n-i)}{din} \right)$\\ \hline
$F\left(i, n-i\right)$ & $\frac{n-i}{n-i-2} $ & 
  $\frac{2\left(n-i\right)^2 \left(n-2\right)}{i\left(n-i-2\right)^2 \left(n-i-4\right)}$ & $\mathcal{N}\left( 1, \frac{2n}{di(n-i)} \right)$ \\ \hline
$Cauchy \left(1, \sqrt{\frac{n-i}{i}} \right)$ & 1 & $\sqrt{\frac{n-i}{i}}$ &
  $Cauchy \left(1, \sqrt{\frac{n-i}{i}} \right)$\\
\hline
\end{tabular}
\end{center}
\caption{The derived model distributions with their location and scale 
parameters. Note that for the $F$ and Beta model the approximate concordance
assumes $n >> i >> 2$.}
\label{table:models}
\end{table}

\section{Benchmark Description, Design, and Implementation}

To assess the behavior of the relative stability measures proposed
in Equation \ref{eqn:rel_stability} 
this section makes use of the ``Airline on-time performance'' data
set \citep{AirlineDataSet}, which was released for the 2009 
American Statistical Association (ASA) Section on Statistical 
Computing and Statistical Graphics biannual data exposition.
The data set includes commercial flight arrival and departure information 
from October 1987 to April 2008 for those carriers with at least 1\% of
domestic U.S. flights in a given year. In total, there is
information for over 120 million flights, with 29 variables related to flight 
time, delay time, departure airport, arrival airport, and so on. 
In total, the uncompressed data set is 12 gigabytes (GB) in size.
Benchmarks in this section focus on the model matrix representation of 
the of the variables shown below in Table \ref{tab:cov_vars}.
The model matrix under consideration will use the treatment-contrast 
expansion of the categorical variables and has a total of 43 columns ($d=43$).

\begin{table}[h]
\begin{center}
\begin{tabular}{|c|l|c|c|} \hline
{\bf Variable Name} & {\bf Description} & {\bf Type} & 
  {\bf Number of categories} \\
& & & {\bf (if applicable)} \\ \hline
Year & The year of flights & categorical & 22 (1987 to 2008) \\ \hline
Month & The month of flights & categorical & 12 \\ \hline
DayOfWeek & The day of week of flights & categorical & 7 \\ \hline
DepTime & The departure time of flights & numeric & NA \\
&  (minutes after midnight) &  & \\ \hline
DepDelay & The departure delay of flights & numeric & NA \\ 
& (minutes) & &  \\ \hline
\end{tabular}
\end{center}
\caption{Variables that will be considered for the covariance matrix stability
benchmark.}
\label{tab:cov_vars}
\end{table}

The $12\,$GB Airline On-time data set will likely not be 
considered ``big'' to many readers. Papers such as 
\cite{Kane2013} have shown how the data set can be explored and
analyzed on relatively modest hardware. However, in designing the
benchmarks two principles were considered before sheer data size.
First, the data set is publicly available. The code included in the
Supplemental Material of this paper is capable of downloading the
data set and running the benchmarks. Users are encouraged to engage
the data themselves and perform their own analyses. Second, the 
data set is large enough to investigate the scatter matrix concordance
properties along with the scaling behavior
of the various regression techniques described in this paper.
Together, the data set and the code available with this paper
provide a set of accessible and reproducible benchmarks that form
a basis for instruction and subsequent research.

The benchmarks presented in the next section were written in the \proglang{R}
programming environment \citep{Rcore} and can be run on a single machine
sequentially, a single machine in parallel, or on a cluster of machines
using the \pkg{foreach} \citep{foreach} package, which provides a concurrent 
interface to a number of different parallel computing technologies for 
embarrassingly parallel challenges. The implementation also makes use of the
\pkg{iterators} and \pkg{itertools} \citep{iterators, itertools} packages
thereby decoupling data access from retrieval and management. 
The most accessible and straightforward
implementation approaches are used to illustrate
the methodological principles
of the models and their implementation.  
The benchmark implementation is flexible enough to be deployed to any 
number of different data management, communication protocol, memory, and 
processing configurations. It is also easily modifiable to accommodate
alternative technologies and methodologies.

\section{Benchmark Results}

This section provides benchmarks exploring the convergence of the
concordance to one on an increasingly larger subset of the Airline On-time
Data set alongside the convergence of the slope coefficients of a Generalized
Linear Model (GLM) of the same subsets. These two sets of benchmarks
establish an empirical connection between regression and concordance. However,
it should also be noted that since the concordance approach does not 
distinguish between independent and dependent variables it provides
a diagnostic not only for the regression but for any regression involving
the same variables. Furthermore, since the concordance calculation for
a subset of the variables can be found directly from the corresponding
scatter matrices a diagnostic easily be calculated for any regression involving
any subset of the variables.

%In this section benchmarks are provided showing the convergence and
%distribution of the scatter matrix concordance along with number of iterations
%required for the GLM to converge under the various warm-start schemes
%described earlier. The choice to investigate the GLM in the number of
%iterations, rather than fitting time was done for three specific reasons.
%First and foremost, our emphasis is on the empirical algorithmic properties 
%of scatter matrix sharing and warm starting. Benchmarking in terms of 
%timing places emphasis on specific technologies, which is not the intention.
%Secondly, most of the overhead in the benchmark code comes from retrieving
%data from disk, and translating from text to a model matrix. Strategies
%for managing these issues is beyond the scope of this investigation. Third,
%as mentioned before, the code provided emphasizes accessibility, pedagogy,
%and reproducibility.  That said, a simple, easy-to-understand, and correct 
%implementations often serves as an ideal starting point for more 
%performant versions and this avenue is currently being explored.

\subsection{Random Sampling Similarity Convergence} \label{sect:rssc}

The first set of benchmarks take random samples of varying 
sizes from 10 to 5000 from the Airline On-time data set and calculates the 
overlapping and non-overlapping concordance between the random subset and 
the entire data set. The overlapping and non-overlapping values
were equal up to seven decimal places and so only the overlapping
concordance is reported.

\begin{figure}
\includegraphics[width=\textwidth]{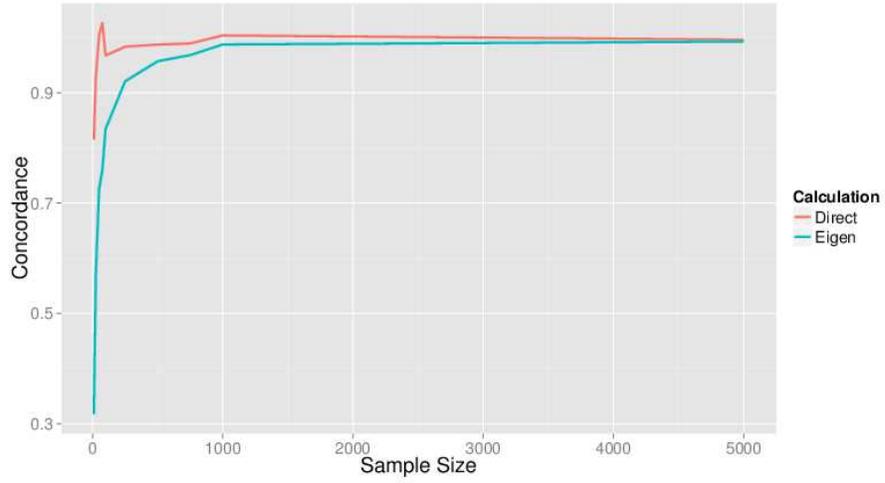}
\caption{Convergence of concordance to one in the 
number of samples using random sampling.}
\label{fig:SimilarityConvergence}
\end{figure}

Figure \ref{fig:SimilarityConvergence} shows the convergence of the
concordance to the value of one when concordance is calculated using 
both Equation \ref{eqn:rel_stability} directly along with the 
trace-equivalent version in \ref{eqn:trace_equivalence}. The plot 
shows that both versions capture the variance-covariance structure 
after only several thousand samples (out of a total of approximately 
120 million). Furthermore, it can be seen that the trace-equivalent version
is slightly smaller than the direct calculation. This is likely because
the direct calculation is more sensitive to noise in the correlation
terms of the scatter matrix and this may account for the overshoot
when the sample size is zero and the small overshoot when the sample size is 
1,000.

\begin{figure}
\includegraphics[width=\textwidth]{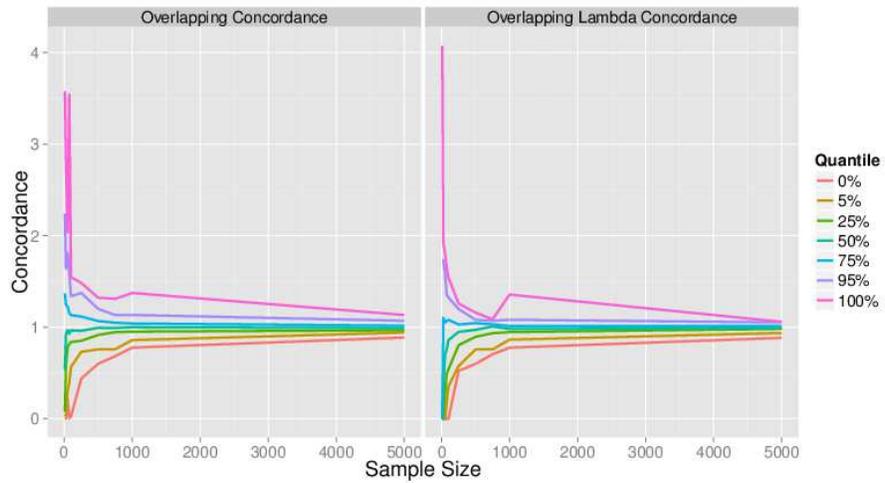} 
\caption{The distribution of overlapping and non-overlapping concordance 
values in the number of samples using random sampling.}
\label{fig:SimilarityQuantileSNonOverlap}
\end{figure}

Each concordance-distribution derivation
reduces to the average of $d$ similarly 
distributed random variables. The terms summed in the
concordance measure can therefore be thought of as samples, and 
their distribution can be examined. This distribution information is shown
in Figure \ref{fig:SimilarityQuantileSNonOverlap} with varying sample size.
The plot again shows little difference between the direct calculation
and its trace-equivalent, especially after a few thousand random samples
are taken and the variance-covariance structure becomes known. The 
trace equivalent does appear to be slightly positively skewed when the number
of samples is smaller and then appears to be essentially symmetric, like
the direct calculation of the concordance. The figure also indicates
that the concordance is not heavy-tailed. The 95\% percentile converges
relatively quickly in the number of samples.

\subsection{Non-Random Sampling Similarity Convergence} \label{sect:nonrandom}

The second set of benchmarks take contiguous samples, starting
at the beginning of the Airline On-time data set, varying the size
from 10 to 1,000,000 to empirically determine the effect of not using
randomized subsets when calculating concordance. Once again, the
overlapping and non-overlapping concordance values were essentially
identical and only the overlapping concordance is reported.

\begin{figure}
\includegraphics[width=\textwidth]{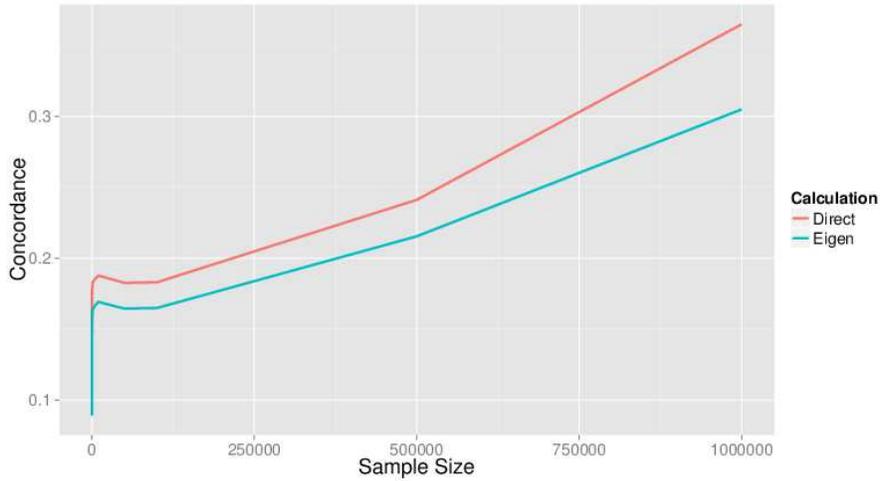}
\caption{Convergence of concordance to one in the 
number of samples using non-random sampling.}
\label{fig:SimilarityConvergenceNonRandom}
\end{figure}

Figure \ref{fig:SimilarityConvergenceNonRandom} shows the convergence of
the concordance values, once again using Equations \ref{eqn:rel_stability}
and \ref{eqn:trace_equivalence}. Where both versions had nearly converged
to one after only a few thousand samples, non-random sampling requires
millions, with the trace-equivalent concordance lagging the direct version.

\begin{figure}
\includegraphics[width=\textwidth]{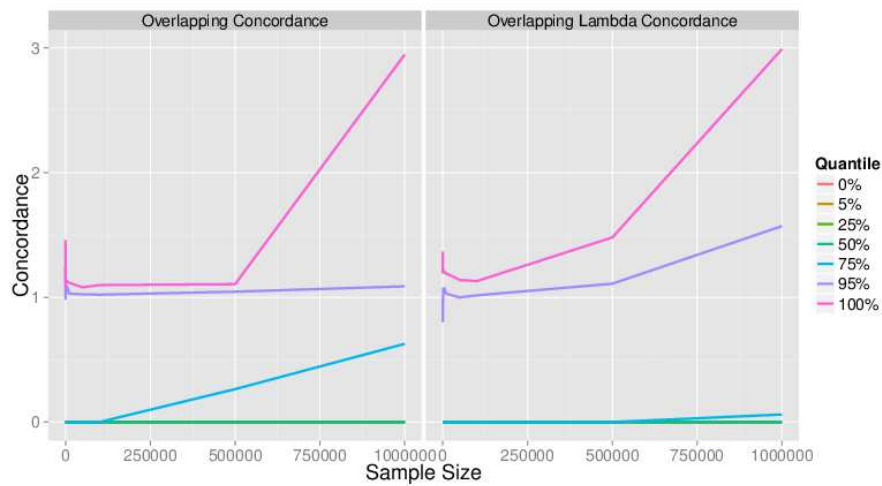} 
\caption{The distribution of overlapping and non-overlapping similarities in 
the number of samples using non-random sampling.}
\label{fig:SimilarityQuantileSNonOverlapNonRandom}
\end{figure}

Convergence in concordance is slow in the non-random case and
this may be due to the fact that 
that the model matrix corresponding to the factor expansion does not 
include a sample of all of the information from the \code{Year}
variable based on Figure \ref{fig:SimilarityConvergenceNonRandom}.
The poor convergence characteristics extend to the distribution of 
the concordance, as shown in Figure \ref{fig:SimilarityConvergenceNonRandom}
thereby underscoring the random sampling in order to achieve representative
subsets.

\subsection{Concordance and GLM Convergence}

The third set of benchmarks compares the concordance with the log mean
square error between the slope coefficient estimates fitted using a 
random subset of the data and estimates fitted using the entire data set.
The variable relationship under investigation is the following 
logistic regression:
\begin{equation*}
Late \sim Year + Month + Day of Week + Departure Time + Departure Delay
\end{equation*}
where a flight is ``$Late$'' if its arrival is at least 30 minutes 
after its scheduled arrival. The other variables are described in 
Table \ref{tab:cov_vars}.
For each subset size, the procedure was repeated 10 times and the
average concordance and log MSE of the coefficients were recorded.

\begin{figure}
\includegraphics[width=\textwidth]{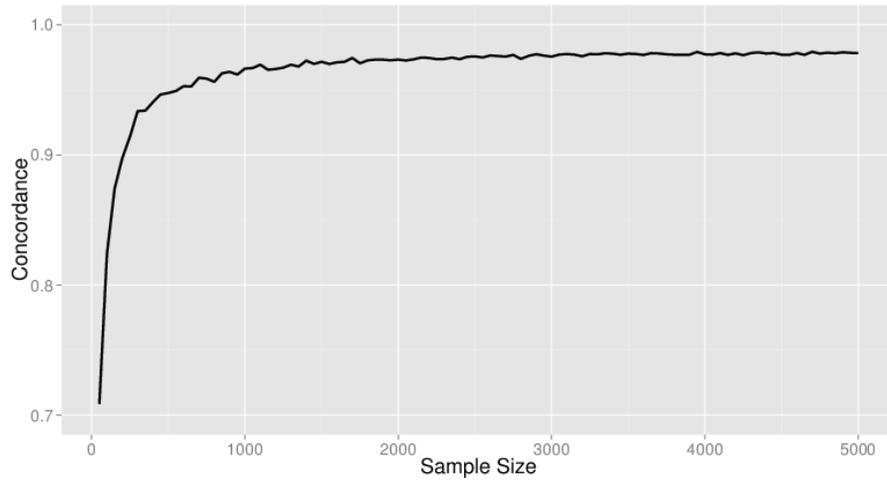}
\caption{Convergence of concordance to one in the 
number of samples using random sampling.}
\label{fig:concordance_glm}
\end{figure}

\begin{figure}
\includegraphics[width=\textwidth]{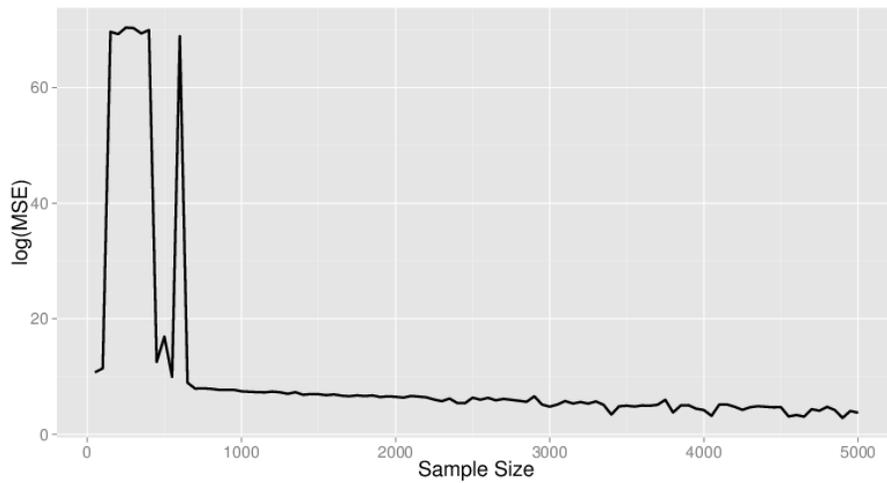}
\caption{Convergence of log MSE between slope coefficient estimates
using random subsets and the entire Airline On-time data set.}
\label{fig:error_glm}
\end{figure}

Figures \ref{fig:concordance_glm} and \ref{fig:error_glm} show the 
convergence behavior of the concordance to one and log MSE of
the slope coefficients to zero respectively. When the number of samples
is small with respect to the number of columns in the design matrix
the concordance is low indicating that the variance-covariance structure
is not well-represented and the log MSE of the slope coefficients 
is high indicating that the estimates of the coefficients is poor.
These values quickly increase and decrease respectively until approximately
500 samples where both the concordance and log MSE slope decreases.

The slow convergence after 500 samples of both measurements implies that
for a precise estimate of the slope coefficients, and corresponding concordance
value close to one a larger subset will need to be fitted. When the subset
is 1,200,000 samples (about 1\% of the total data size) the concordance
values is 0.9820691 and the log MSE is -2.788843. At 12,000,000 samples
(10\% of the data) the concordance value is 0.9822198 and the log MSE is
-4.902919. This ``slow'' convergence in the number of samples may indicative
of complex relationships among the variables that require larger samples
to capture.

\section{Conclusions} \label{sect:conclusion}

Regression models depend directly on the model design matrix and its
properties. This paper attempts to bridge the ``small-data'' and asymptotic
behavior of regression models by proposing a simple measure of concordance 
between a design matrix and a subset
of its rows that estimates how well a subset captures the variance-covariance
structure of increasingly large data sets.
We illustrate the use of this measure in a
heuristic method for selecting row partition sizes that balance statistical and
computational efficiency goals in real-world problems.

Our future work in this area will focus on data fusion. In many cases
it may be desirable to combine two data sources with analogous measurements
to increase the power of statistical experiments. Concordance provides a
distance between the variance-covariance structure with known distributional
characteristics. Tests for equivalence can then be used to rigorously assess the
appropriateness of combining data sources thereby allowing practitioners
to make better use of existing, potentially under-powered data.

\section*{Acknowledgments}

A portion of this research is based on research sponsored by DARPA under award
FA8750-12-2-0324.  The U.S. Government is authorized to reproduce and
distribute reprints for Governmental purposes notwithstanding any
copyright notation thereon.

\section*{Disclaimer}
The views and conclusions contained herein are those of the authors and
should not be interpreted as necessarily representing the official policies
or endorsements, either expressed or implied, of DARPA or the U.S.
Government.

\bibliography{scatter_matrix_sharing}
\bibliographystyle{jss}

\end{document}